\newtheorem{theorem}{Theorem}
\newtheorem{corollary_of_theorem}{Corollary}[theorem]
\newtheorem*{remark}{Remark}
\def\@fnsymbol#1{\ensuremath{\ifcase#1\or *\or \dagger\or \ddagger\or
   \mathsection\or \mathparagraph\or \|\or **\or \dagger\dagger
   \or \ddagger\ddagger \else\@ctrerr\fi}}
\newcommand{\ssymbol}[1]{^{\@fnsymbol{#1}}}
\title{Boosting of Implicit Neural Representation-based Image Denoiser}
\name{Zipei Yan\textsuperscript{1}, Zhengji Liu\textsuperscript{2}, Jizhou Li\textsuperscript{1}\thanks{This work is supported by City University of Hong Kong under grant 9229120.}}
\address{
\textsuperscript{1}School of Data Science, City University of Hong Kong, Hong Kong SAR \\
\textsuperscript{2}School of Optometry, The Hong Kong Polytechnic University, Hong Kong SAR \\ 
}
\begin{document}
\ninept
\maketitle
\begin{abstract}
Implicit Neural Representation (INR) has emerged as an effective method for unsupervised image denoising. However, INR models are typically overparameterized; consequently, these models are prone to overfitting during learning, resulting in suboptimal results, even noisy ones. To tackle this problem, we propose a general recipe for regularizing INR models in image denoising. In detail, we propose to iteratively substitute the supervision signal with the mean value derived from both the prediction and supervision signal during the learning process. We theoretically prove that such a simple iterative substitute can gradually enhance the signal-to-noise ratio of the supervision signal, thereby benefiting INR models during the learning process. Our experimental results demonstrate that INR models can be effectively regularized by the proposed approach, relieving overfitting and boosting image denoising performance.
\end{abstract}
\begin{keywords}
Image denoising, implicit neural representation, regularization, boosting algorithm
\end{keywords}
\section{Introduction}
\label{sec:intro}


Image denoising is a fundamental inverse problem in low-level computer vision. In this setting, we are given a vectorized noisy observation $\bm{y} \in \mathbb{R}^{K}$ combined by the underlying clean signal $\bm{x} \in \mathbb{R}^{K}$ and noise, which is generally formulated as follows:
\begin{equation}
    \bm{y} = \bm{x} + \bm{n}
    \label{eq:noise}
\end{equation}
where $\bm{n} \in \mathbb{R}^{K}$ is an additive noise and typically assumed to be \textit{i.i.d.} zero-mean Gaussian noise, i.e., $\bm{n} \sim \mathcal{N}(0,\sigma^2)$, where $\sigma$ denotes the standard deviation. Extensive algorithms and models have been developed for image denoising. Before the era of deep learning, filter-based methods~\cite{zhang2008multiresolution,buades2005review,knaus2015dual}, thresholding methods~\cite{chang2000adaptive,blu2007sure,candes2013unbiased} and block-matching-based methods \cite{dabov2007bm3d,magg2013bm4d} are representative works. Nowadays, deep neural networks, such as DnCNN \cite{zhang2017dncnn}, SwinIR \cite{liang2021swinir} and Restormer \cite{syed2022restormer}, etc., achieve remarkable results. In addition to these supervised learning models, self-supervised methods, such as Noise2Self \cite{batson2019n2s}, Noise2Void \cite{alex2019n2v} and Self2Self \cite{quan2020s2s}, etc., provide another alternative to improve the signal-to-noise ratio (SNR), which only utilizes a single noisy image during training.

Recent advancements in image denoising have underscored the potential of Implicit Neural Representations (INR). Noteworthy methods include DIP~\cite{ulyanov2018dip}, SIREN~\cite{sitzmann2020implicit} and WIRE~\cite{saragadam2023wire}, among others. INR offers a novel approach to representing implicitly defined, continuous, and differentiable signals parameterized by neural networks~\cite{sitzmann2020implicit}. Through INR, networks are trained to map specific coordinates or even random noise directly to the target denoised representation, efficiently adapting to fit the noisy observation.

However, INR models are typically overparameterized~\cite{wang2021early}. Consequently, these models are prone to overfitting during training, resulting in suboptimal results, even fitting noisy ones. Existing solutions to this problem primarily pursue the following directions. The first attempts to stop the training once the model begins to fit noise. Early Stopping~\cite{wang2021early} exemplifies this approach, which utilizes the running variance of past denoised results as a metric to determine the optimal training stopping point. Nevertheless, its performance can be inconsistent, and the hyper-parameter optimization becomes particularly intense. The second one integrates regularization to relieve the overfitting problem, such as Total-Variation (TV)~\cite{RUDIN1992259}, Regularization by Denoising (RED)~\cite{yaniv2017red}, etc. For example, DIP can be effectively regularized by TV~\cite{liu2019diptv,cascarano2021combining}, as well as  RED~\cite{mataev2019deepred}, namely DeepRED. Although effective, these methods require additional computation and involve further hyperparameter tuning. Furthermore, the boosting algorithm~\cite{romano2015boosting,chang2018boost,chang2020boost,ma2022boost} presents a holistic strategy for enhancing the denoised outputs. Yet, despite its generality and efficacy, such methods are typically employed as post-processing measures. This divides the process into two distinct phases: primary denoising followed by the boosting stage. This bifurcation naturally demands additional computations and introduces more hyper-parameters to the process. 

In this work, we propose a simple yet effective method to relieve the overfitting tendencies of INR models, thereby boosting their performance. Specifically, our method revolves around iteratively updating the supervision signal with the mean value derived from both the prediction and the original supervision signal throughout the training phase. This iterative substitution approach termed ITS, is shown to enhance the supervision signal's SNR incrementally. As a result, the learning trajectory of INR models becomes more robust. Unlike conventional techniques, our approach introduces nearly zero extra computational overhead and seamlessly integrates into the learning process of INR models. Our experimental results confirm that the proposed method is an effective regularization tool for INR models, addressing overfitting challenges and boosting overall performance. Our contributions are summarized as follows:
\begin{itemize}
    \item We propose a general paradigm for regularizing INR models in image denoising to boost their performance. 
    \item We theoretically prove the proposed ITS approach can progressively improve the supervision signal's SNR.
    \item We evaluate the proposed method when integrated with typical INR models for image denoising, and experimental results demonstrate the effectiveness of our method.
\end{itemize}






\section{Preliminary}
The INR is trained to map from a given coordinate or random noise to the desired denoised representation during the learning process. This can be mathematically formulated as follows:
\begin{equation}
    \bm{\hat{x}} = f_{\theta}(\bm{z}),
\end{equation}
where $f_{\theta}(\cdot)$ denotes the INR model with $\theta$ representing its parameters. The term $\bm z$ is the given coordinate~\cite{sitzmann2020implicit,saragadam2023wire} or random noise~\cite{ulyanov2018dip}.

The overall objective function for optimizing $f_{\theta}(\cdot)$ is formulated as follows:
\begin{equation}
    \mathcal{L} = \Vert \bm{\hat{x}} - \bm{y} \Vert^2 + \lambda \mathcal{R}(\bm{\hat{x}}),
    \label{eq:loss}
\end{equation}
where $\Vert\cdot \Vert$ denotes the $\ell_2$ norm, and $\mathcal{R}(\cdot)$ is the regularization term for introducing explicit priors, such as TV~\cite{RUDIN1992259,liu2019diptv}, RED~\cite{yaniv2017red,mataev2019deepred}, and others.

For simplicity, let's consider when $\lambda=0$, i.e., without any explicit priors, the objective function simplifies to $\mathcal{L} = \Vert \bm{\hat{x}} - \bm{y} \Vert^2$. Within this context, the INR models align with the dominant semantic context (represented by low-frequency signals) before they start overfitting to the noise, symbolized by high-frequency signals~\cite{xu2019frequency}. 


Additionally, when the noise levels increase, the learning becomes more challenging, and INR models are more vulnerable to noise disturbances~\cite{wang2021early,xu2023revisiting}. In contrast, if the noise level is small, the learning is much easier; more aggressively, in an extreme case that $\bm{y} = \bm{x}$ if $\sigma=0$, the objective function leads to fitting the clean ground truth.





These observations raise a simple question: Can we clean up the noisy supervision signal during the INR training process? If so, the INR model can learn much better because the noise level is reduced to provide a better supervision signal; even more extreme, if we clean up the original noisy supervision signal to the clean ground truth, then the issue of overfitting to noise will be eliminated.


\newcommand{\textsupsub}[3]{$\text{#1}^{\text{#2}}_{\text{#3}}$}

\newcommand{\bmtextsupsub}[3]{$\textbf{\text{#1}}^{\text{#2}}_{\text{#3}}$}

\begin{table*}[!t]
    \caption{Experimental results on Set9 and Set11. Metric: Average PSNR / SSIM. The best results are \underline{underlined}, and improved INR results are \textbf{bold-faced}. \textsuperscript{*} and \textsuperscript{**} denote statistical significance in the paired t-test (\textsuperscript{*} indicates $p \leq 0.05$, \textsuperscript{**} indicates $p \leq 0.001$). }
    
    \label{tab:main_results}
    \centering
    \resizebox{\textwidth}{!}{
    \begin{tabular}{lccccccc>{\columncolor[gray]{0.8}}c>{\columncolor[gray]{0.8}}c>{\columncolor[gray]{0.8}}c}
        \toprule
         & & \multicolumn{3}{c}{Baseline} & \multicolumn{3}{c}{INR} & \multicolumn{3}{c}{\cellcolor[gray]{0.8}INR w/ ITS (Ours)} \\
        \cmidrule(r){3-5}\cmidrule(r){6-8}\cmidrule(r){9-11}
        Dataset & $\sigma$ & BM3D & DnCNN & Restormer & DIP & SIREN & WIRE & DIP & SIREN & WIRE \\
        \midrule
        \multirow{4}{*}{Set9} & 25 & 30.15 / 0.80 & 30.82 / 0.84 &  \underline{32.28 / 0.86} & 28.66 / 0.75 & 28.00 / 0.72 & 27.34 / 0.68 & \textbf{28.92}\textsubscript{(+0.26)} / \textbf{0.76}\textsubscript{(+0.01)} & \textbf{28.24}\textsubscript{(+0.24)} / \textbf{0.73}\textsubscript{(+0.01)} & \textbf{27.69}\textsubscript{(+0.35)} / \textbf{0.70}\textsubscript{(+0.02)} \\
         & 50 & 26.60 / 0.70 & 27.41 / 0.76 & \underline{28.07 / 0.77} & 22.63 / 0.46 & 23.23 / 0.51 & 23.66 / 0.54 & \bmtextsupsub{25.95}{*}{(+3.32)} / \bmtextsupsub{0.67}{**}{(+0.21)} & \textbf{25.15}\textsubscript{(+1.92)} / \textbf{0.62}\textsubscript{(+0.11)} & \textbf{24.82}\textsubscript{(+1.16)} / \textbf{0.60}\textsubscript{(+0.06)} \\
         & 75 & 23.69 / 0.63 & 24.22 / 0.67 & \underline{24.78 / 0.70} & 18.43 / 0.27 & 18.52 / 0.28 & 21.33 / 0.46 & \bmtextsupsub{22.38}{**}{(+3.95)} / \bmtextsupsub{0.51}{**}{(+0.24)} & \bmtextsupsub{21.93}{*}{(+3.41)} / \bmtextsupsub{0.51}{*}{(+0.23)} & \textbf{22.57}\textsubscript{(+1.24)} / \textbf{0.55}\textsubscript{(+0.09)} \\
         & 100 & 21.35 / 0.58 & 20.82 / 0.48 & \underline{22.32 / 0.65} & 15.66 / 0.17 & 15.31 / 0.16 & 19.62 / 0.42 & \bmtextsupsub{19.46}{**}{(+3.80)} / \bmtextsupsub{0.36}{**}{(+0.19)} & \bmtextsupsub{19.09}{*}{(+3.78)} / \bmtextsupsub{0.37}{**}{(+0.21)} & \textbf{20.68}\textsubscript{(+1.06)} / \textbf{0.52}\textsubscript{(+0.10)} \\
        \midrule
        \multirow{4}{*}{Set11} & 25 & 29.94 / 0.84 & \underline{29.98 / 0.85} & 27.88 / 0.72 & 25.93 / 0.67 & 25.65 / 0.65 & 26.09 / 0.67 & \textbf{27.10}\textsubscript{(+1.17)} / \textbf{0.75}\textsubscript{(+0.08)} & \textbf{27.11}\textsubscript{(+1.46)} / \textbf{0.72}\textsubscript{(+0.07)} & \textbf{26.38}\textsubscript{(+0.29)} / \textbf{0.70}\textsubscript{(+0.03)} \\
         & 50 & \underline{26.16 / 0.74} & 25.93 / 0.73 & 20.97 / 0.44 & 19.68 / 0.38 & 18.34 / 0.35 & 22.11 / 0.50 & \bmtextsupsub{22.98}{**}{(+3.30)} / \bmtextsupsub{0.54}{*}{(+0.16)} & \bmtextsupsub{22.06}{*}{(+3.72)} / \bmtextsupsub{0.50}{*}{(+0.15)} & \textbf{23.39}\textsubscript{(+1.28)} / \textbf{0.56}\textsubscript{(+0.06)} \\
         & 75 & \underline{23.49 / 0.65} & 20.51 / 0.42 & 17.44 / 0.30 & 15.71 / 0.23 & 14.02 / 0.19 & 18.96 / 0.37 & \bmtextsupsub{19.51}{**}{(+3.80)} / \bmtextsupsub{0.37}{*}{(+0.14)} & \bmtextsupsub{18.08}{*}{(+4.06)} / \bmtextsupsub{0.34}{*}{(+0.15)} & \textbf{20.80}\textsubscript{(+1.84)} / \textbf{0.46}\textsubscript{(+0.09)} \\
         & 100 & \underline{21.43 / 0.58} & 15.16 / 0.20 & 15.69 / 0.23 & 13.16 / 0.15 & 11.49 / 0.12 & 16.81 / 0.29 & \bmtextsupsub{16.92}{**}{(+3.76)} / \bmtextsupsub{0.27}{*}{(+0.11)} & \bmtextsupsub{15.03}{*}{(+3.54)} / \bmtextsupsub{0.22}{*}{(+0.10)} & \textbf{18.79}\textsubscript{(+1.98)} / \textbf{0.38}\textsubscript{(+0.09)} \\
        \bottomrule
    \end{tabular}}
\end{table*}

\newcommand{\dfigwidth}{0.155}

\begin{figure*}[!t]
    \centering
    \begin{subfigure}[b]{\textwidth}
        \centering
        \begin{subfigure}[b]{\dfigwidth\textwidth}
             \centering
             \includegraphics[width=\textwidth]{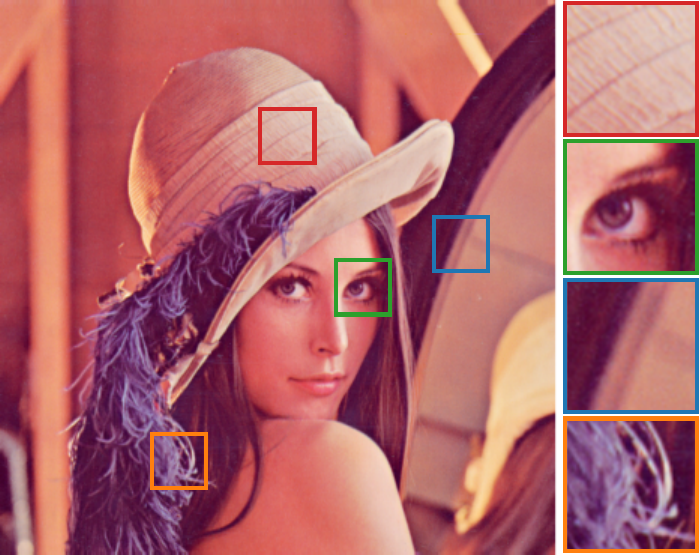}
             \caption*{\scriptsize \textit{Lena}, PSNR / SSIM}
        \end{subfigure}
        \begin{subfigure}[b]{\dfigwidth\textwidth}
            \centering
            \includegraphics[width=\textwidth]{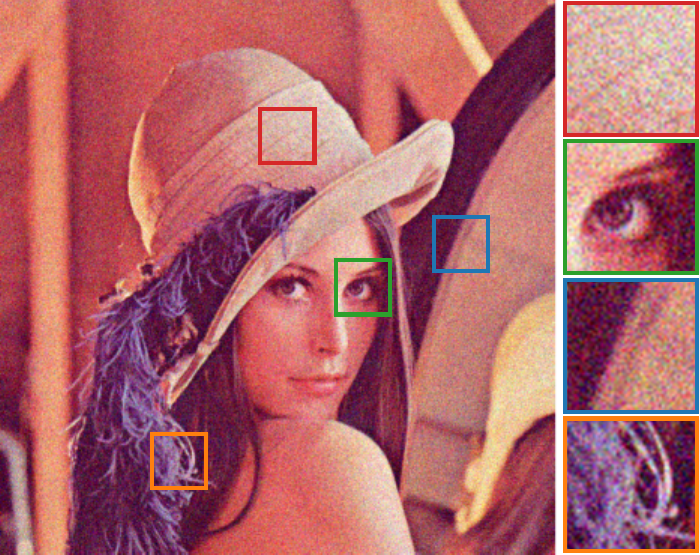}
            \caption*{\scriptsize Noisy($\sigma$=25), 20.37 / 0.30}
        \end{subfigure}
        \begin{subfigure}[b]{\dfigwidth\textwidth}
             \centering
             \includegraphics[width=\textwidth]{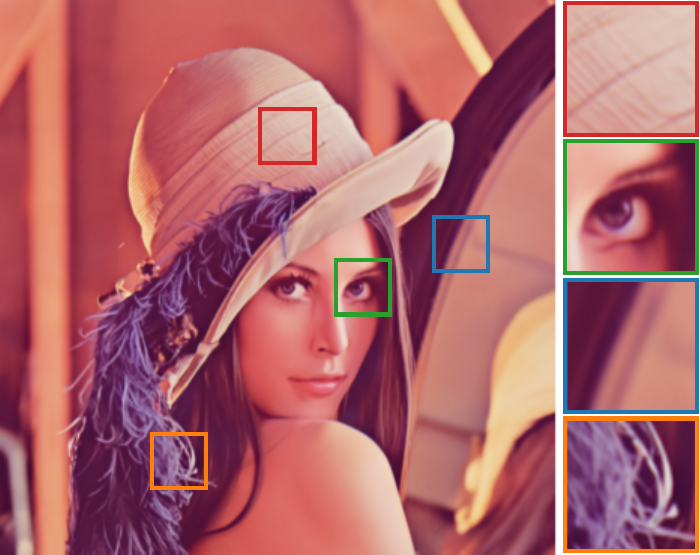}
             \caption*{\scriptsize Restormer, 32.95 / 0.86}
        \end{subfigure}
        \begin{subfigure}[b]{\dfigwidth\textwidth}
            \centering
            \includegraphics[width=\textwidth]{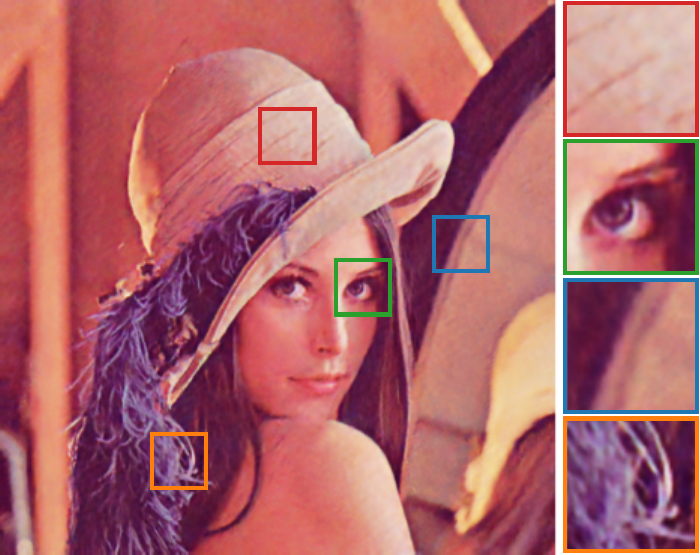}
            \caption*{\scriptsize DIP, 29.73 / 0.78}
        \end{subfigure}
        \begin{subfigure}[b]{\dfigwidth\textwidth}
             \centering
             \includegraphics[width=\textwidth]{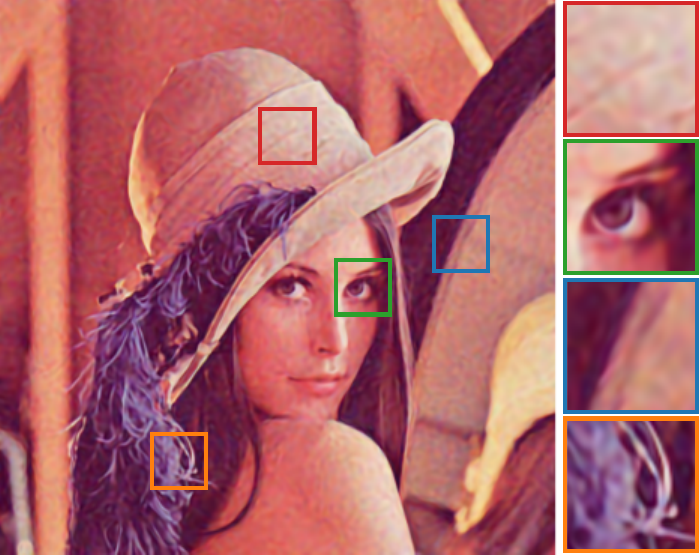}
             \caption*{\scriptsize SIREN, 30.20 / 0.77}
        \end{subfigure}
        \begin{subfigure}[b]{\dfigwidth\textwidth}
             \centering
             \includegraphics[width=\textwidth]{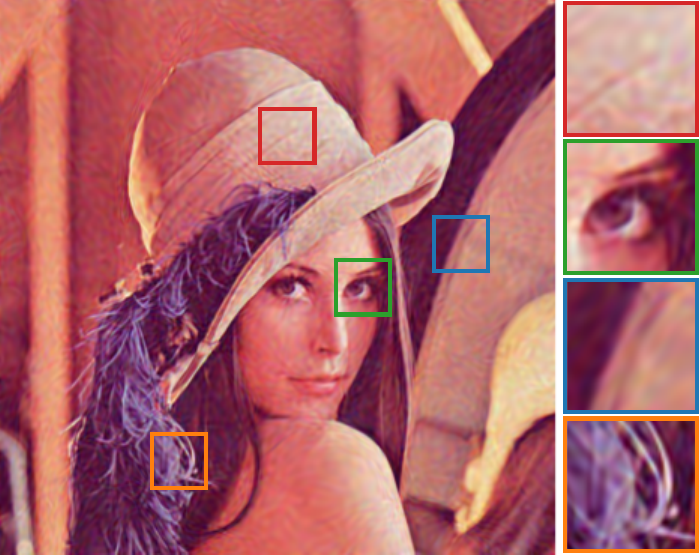}
             \caption*{\scriptsize WIRE, 29.45 / 0.74}
        \end{subfigure}

        \begin{subfigure}[b]{\dfigwidth\textwidth}
            \centering
            \hfill
            \caption*{}
        \end{subfigure}
        \begin{subfigure}[b]{\dfigwidth\textwidth}
             \centering
             \includegraphics[width=\textwidth]{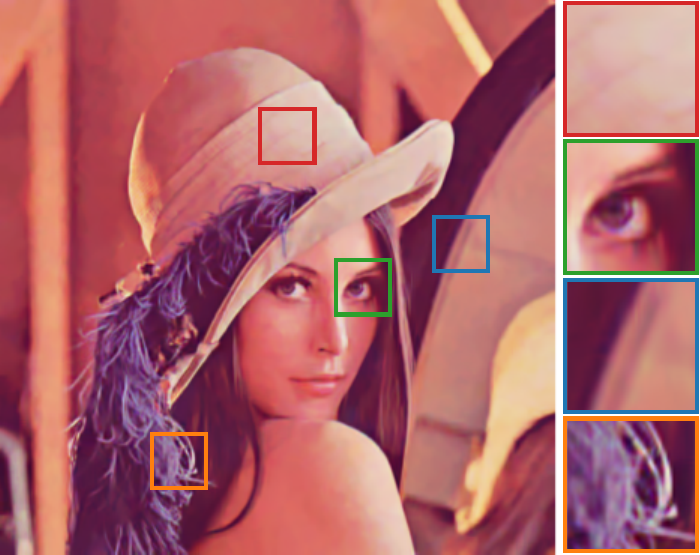}
             \caption*{\scriptsize BM3D, 31.24 / 0.81}
        \end{subfigure}
        \begin{subfigure}[b]{\dfigwidth\textwidth}
             \centering
             \includegraphics[width=\textwidth]{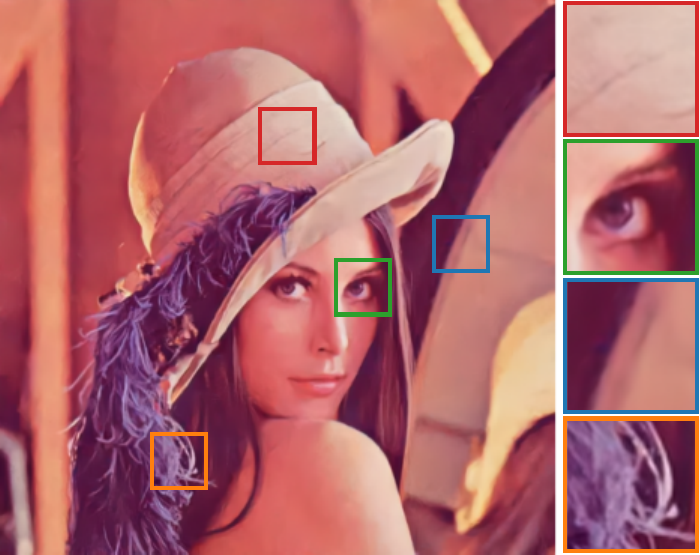}
             \caption*{\scriptsize DnCNN, 31.76 / 0.83}
        \end{subfigure}
        \begin{subfigure}[b]{\dfigwidth\textwidth}
             \centering
             \includegraphics[width=\textwidth]{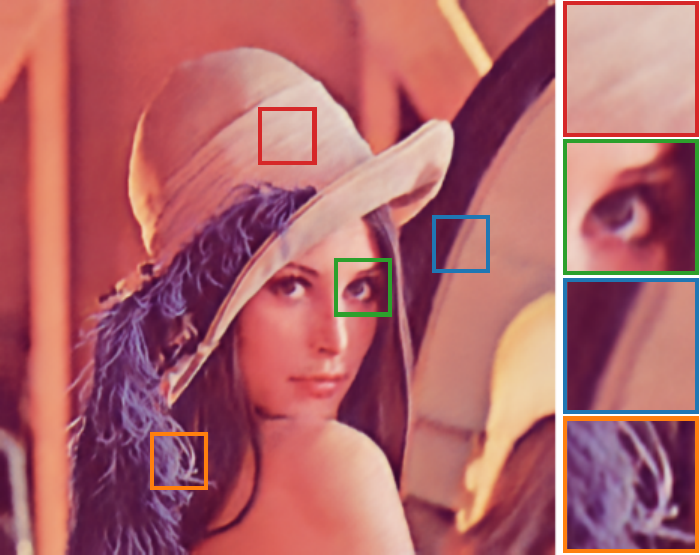}
             \caption*{\scriptsize DIP w/ ITS, 30.48 / 0.81}
        \end{subfigure}
        \begin{subfigure}[b]{\dfigwidth\textwidth}
             \centering
             \includegraphics[width=\textwidth]{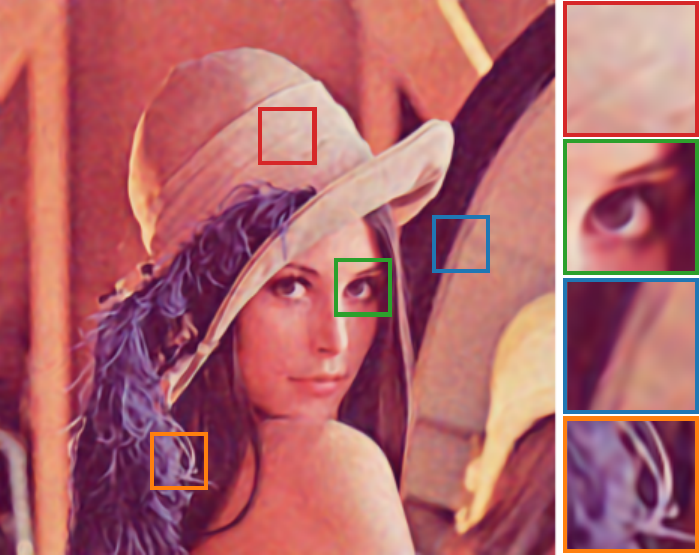}
             \caption*{\scriptsize SIREN w/ ITS, 30.43 / 0.79}
        \end{subfigure}
        \begin{subfigure}[b]{\dfigwidth\textwidth}
             \centering
             \includegraphics[width=\textwidth]{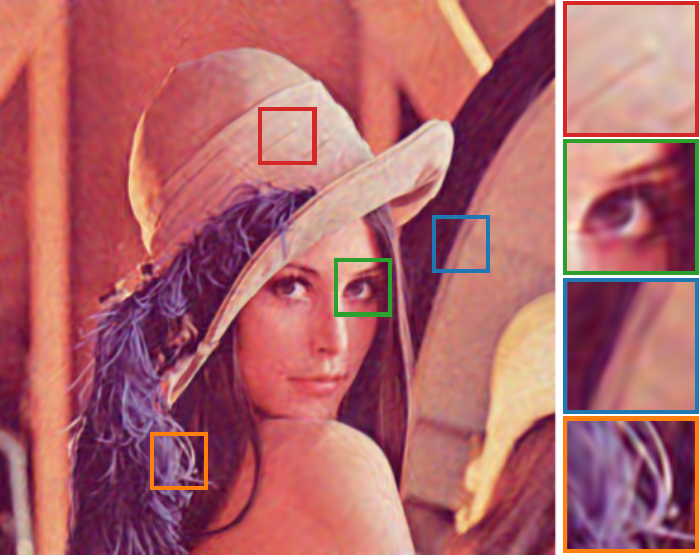}
             \caption*{\scriptsize WIRE w/ ITS, 29.92 / 0.77}
        \end{subfigure}
        \caption{Denoised results.}
    \end{subfigure}
    \begin{subfigure}[b]{\textwidth}
        \centering
        \begin{subfigure}[b]{\dfigwidth\textwidth}
             \centering
             \includegraphics[width=\textwidth]{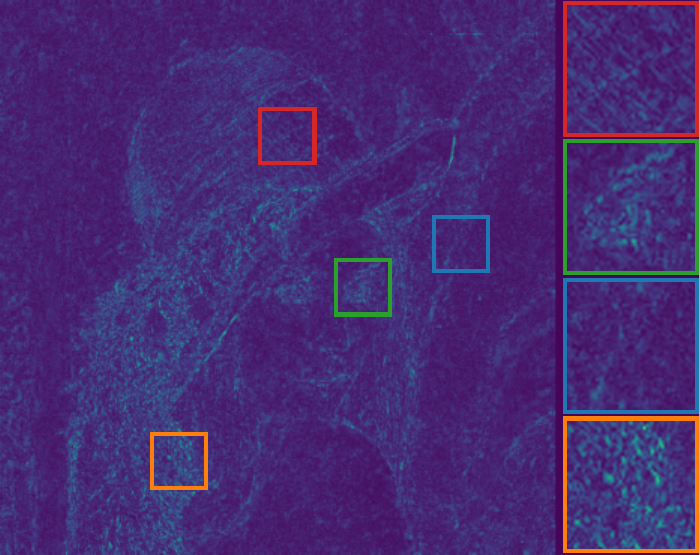}
             \caption*{\scriptsize $\hat{\sigma}(\bm{e}_{\text{DIP}})$:  2.04}
        \end{subfigure}
        \begin{subfigure}[b]{\dfigwidth\textwidth}
             \centering
             \includegraphics[width=\textwidth]{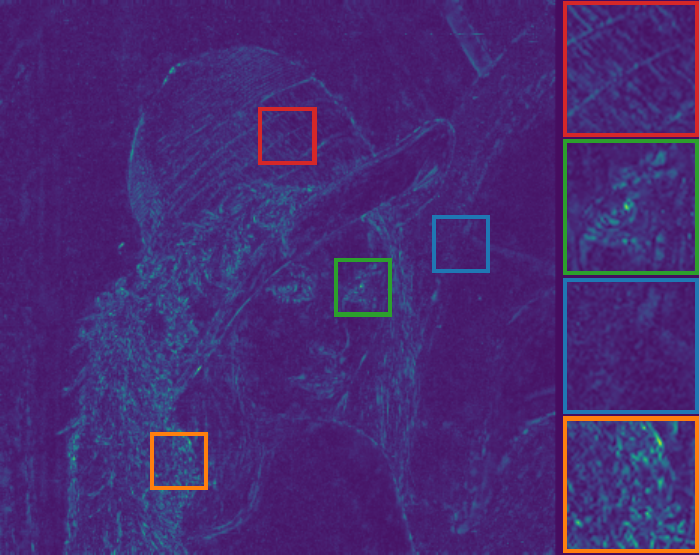}
             \caption*{\scriptsize $\hat{\sigma}(\bm{e}_{\text{DIP w/ ITS}})$:  1.76}
        \end{subfigure}
        \begin{subfigure}[b]{\dfigwidth\textwidth}
             \centering
             \includegraphics[width=\textwidth]{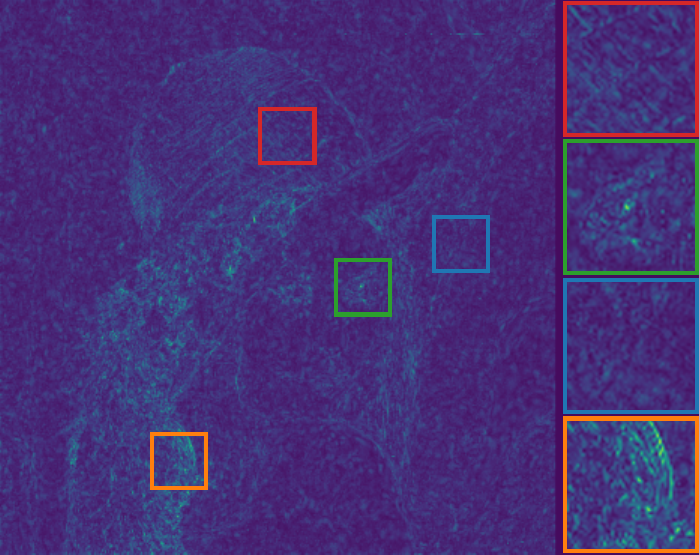}
             \caption*{\scriptsize $\hat{\sigma}(\bm{e}_{\text{SIREN}})$:  1.89}
        \end{subfigure}
        \begin{subfigure}[b]{\dfigwidth\textwidth}
             \centering
             \includegraphics[width=\textwidth]{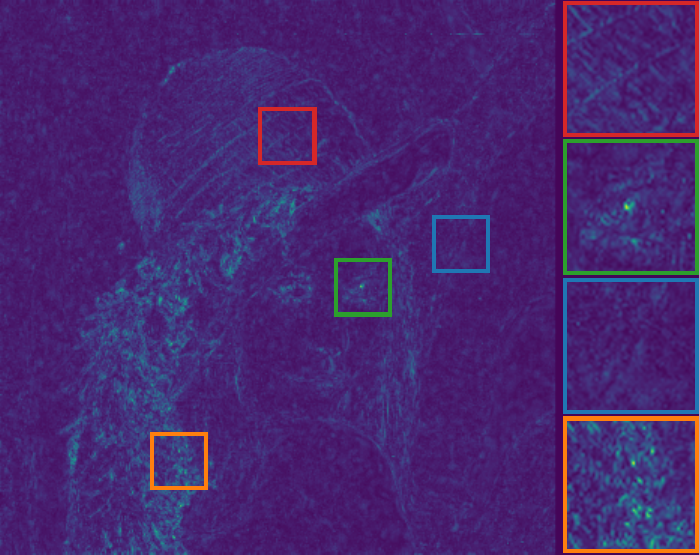}
             \caption*{\scriptsize $\hat{\sigma}(\bm{e}_{\text{SIREN w/ ITS}})$:  1.80}
        \end{subfigure}
        \begin{subfigure}[b]{\dfigwidth\textwidth}
             \centering
             \includegraphics[width=\textwidth]{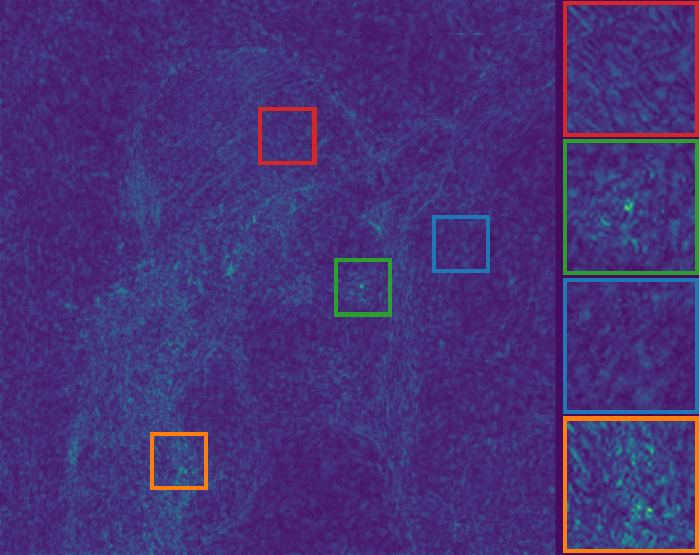}
             \caption*{\scriptsize $\hat{\sigma}(\bm{e}_{\text{WIRE}})$: 2.01}
        \end{subfigure}
        \begin{subfigure}[b]{\dfigwidth\textwidth}
             \centering
             \includegraphics[width=\textwidth]{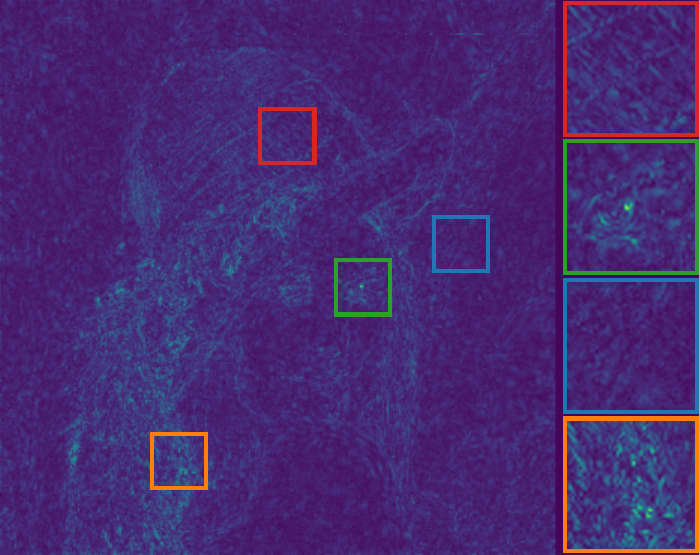}
             \caption*{\scriptsize $\hat{\sigma}(\bm{e}_{\text{WIRE w/ ITS}})$: 1.91}
        \end{subfigure}
        \caption{Reconstruction error maps.}
    \end{subfigure}
    \caption{Visualization of denoised results on \textit{Lena} with $\sigma$=25. Each denoised result is quantified by PSNR and SSIM. Besides, each reconstruction error map is calculated from Eq.\eqref{eq:denoised}, where $\hat{\sigma}$ indicates the noise level estimated by a robust wavelet-based estimator.}
    \label{fig:denoised}
\end{figure*}

\section{Proposed Algorithm}
\label{sec:alg}

Considering that the INR model is generally optimized using stochastic gradient descent (SGD) within $T$ iterations, then for a given $i$-th iteration, the denoised result $\bm{\hat{x}}^i$ can be expressed as follows~\cite{romano2015boosting,chang2018boost,chang2020boost,jang2021c2n}:
\begin{equation}
    \begin{aligned}
        \bm{\hat{x}}^i &= \bm{x} + \bm{e}^i \\
        &= \bm{x} + ( \bm{e_n}^i - \bm{e_x}^i),
    \end{aligned}
    \label{eq:denoised}
\end{equation}
where $\bm{e}^i$ denotes reconstruction error. This error can be further decomposed into two parts: $\bm{e_n}^i$ denotes the residual noise from $\bm{n}$ and $\bm{e_x}^i$ stands for the lost components of $\bm{x}$. 

Assuming that at $t$ iteration, the model begins to overfit the noise, then there exists a relationship between $\text{SNR}(\bm{\hat{x}}^i)$ at different iterations, that satisfies:
\begin{equation}
    \text{SNR}(\bm{\hat{x}}^1) < ... < \text{SNR}(\bm{\hat{x}}^t) > ... > \text{SNR}(\bm{\hat{x}}^T),
\end{equation}
where $\text{SNR}(\bm{\hat{x}}^i)=\frac{\Vert \bm{x}^i \Vert}{\Vert \bm{e}^i \Vert}$, and the above relationship's shape mimics a bell curve. In a practical setting, it is infeasible to determine $t$ since only the noisy observation $\bm{y}$ is available without the information of ground truth $\bm{x}$. 

However, given that the noisy observation $\bm{y}$ is described by Eq.\eqref{eq:noise}, we suggest merging the internal denoised result $\bm{\hat{x}}^{i}$ with noisy observation $\bm{y}$ to generate a renewed supervision signal for $(i+1)$-th iteration:
\begin{equation}
\begin{aligned}
    \bm{\hat{y}}^{i+1} &= \frac{\bm{y} + \bm{\hat{x}}^{i}}{2} \\
    &= \bm{x} + \frac{\bm{n} + \bm{e}^{i}}{2},
    \label{eq:replace}
\end{aligned}
\end{equation}
where $\frac{\bm{n} + \bm{e}^{i}}{2}$ is a new reconstruction error. If $\bm{e}^{i} < \bm{n}$ holds true, the above substitution can yield a supervision signal with an enhanced SNR. An illustrative proof is provided in Theorem \ref{thm:improve_snr}.


\begin{theorem}
    Assuming the model $f_{\theta}(\cdot)$ is converged by minimizing Eq.\eqref{eq:loss} during the SGD optimization within $T$ iterations, then for any iteration $i$ satisfies $1 \ll i \ll T$, the above substitution from Eq.\eqref{eq:replace} can improve the SNR of the supervision signal, i.e., $\text{SNR}(\bm{\hat{y}}^{i+1}) > \text{SNR}(\bm{y})$.
    \label{thm:improve_snr}
\end{theorem}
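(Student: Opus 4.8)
The plan is to convert the SNR comparison into a single inequality on the reconstruction-error norm and then read that inequality off from the bell-curve behaviour already posited for INR training. First I would compute both sides with the ``signal over noise-like residual'' convention used above (so $\text{SNR}(\bm y) = \|\bm x\| / \|\bm n\|$, taking $\bm x^i = \bm x$ since the signal is fixed). From Eq.~\eqref{eq:replace}, $\bm{\hat y}^{i+1} = \bm x + \tfrac12(\bm n + \bm e^i)$, hence
\[
    \text{SNR}(\bm{\hat y}^{i+1}) \;=\; \frac{\|\bm x\|}{\tfrac12\|\bm n + \bm e^i\|} \;=\; \frac{2\|\bm x\|}{\|\bm n + \bm e^i\|},
\]
and dividing by $\text{SNR}(\bm y)$ shows that the claim $\text{SNR}(\bm{\hat y}^{i+1}) > \text{SNR}(\bm y)$ is \emph{equivalent} to $\|\bm n + \bm e^i\| < 2\|\bm n\|$.

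Next I would bound the left-hand side by the triangle inequality, $\|\bm n + \bm e^i\| \le \|\bm n\| + \|\bm e^i\|$, so it suffices to prove the strict bound $\|\bm e^i\| < \|\bm n\|$; equivalently $\text{SNR}(\bm{\hat x}^i) > \text{SNR}(\bm y)$, i.e.\ the internal estimate at iteration $i$ already carries a better SNR than the raw observation. This is exactly what the hypothesis $1 \ll i \ll T$ is there to supply. Since $f_\theta$ is run to convergence on Eq.~\eqref{eq:loss} with $\lambda = 0$ and is overparameterized, the terminal iterate reproduces the data, $\bm{\hat x}^T \approx \bm y$, so $\bm e^T \approx \bm n$ and $\text{SNR}(\bm{\hat x}^T) \approx \text{SNR}(\bm y)$. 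Combined with the bell-shaped relation $\text{SNR}(\bm{\hat x}^1) < \dots < \text{SNR}(\bm{\hat x}^t) > \dots > \text{SNR}(\bm{\hat x}^T)$ and the spectral-bias picture (the dominant low-frequency part of $\bm x$ is fitted early, so $\|\bm e_x^i\|$ is already small, while the high-frequency noise is not yet fitted, so $\|\bm e_n^i\|$ is still small), any iteration with $1 \ll i \ll T$ sits on the high-SNR plateau near the peak, strictly above the endpoint value $\text{SNR}(\bm{\hat x}^T) \approx \text{SNR}(\bm y)$ --- immediate from monotonicity when $t < i \ll T$, and in general just the statement that the INR denoiser's best iterates beat the identity map. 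Hence $\|\bm e^i\| < \|\bm n\|$, and therefore $\|\bm n + \bm e^i\| \le \|\bm n\| + \|\bm e^i\| < 2\|\bm n\|$, which is the desired inequality; the strictness survives because $\|\bm e^i\| < \|\bm n\|$ is itself strict.

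The main obstacle is precisely the step $\|\bm e^i\| < \|\bm n\|$ --- everything else is elementary algebra and one triangle inequality. This step cannot be obtained from Eq.~\eqref{eq:loss} alone; it encodes the empirically observed training dynamics (the bell-shaped SNR trajectory, the low-frequency-first learning bias), and a fully rigorous derivation would require a model of the optimization dynamics (e.g.\ an NTK/spectral-bias analysis) to pin down the iteration window in which the error norm drops below the noise norm. I would therefore state it as a consequence of the observations recalled above, and optionally add a probabilistic sanity check: for $\bm n \sim \mathcal{N}(0, \sigma^2 I_K)$ the norm $\|\bm n\|^2$ concentrates at $K\sigma^2$, and modelling $\bm e^i$ in the interior regime as having per-coordinate dispersion $\sigma_i < \sigma$ gives $\|\bm e^i\|^2 \approx K \sigma_i^2 < K\sigma^2$ with room to spare --- consistent with the error-map estimates $\hat\sigma(\bm e) \ll \sigma$ reported in Fig.~\ref{fig:denoised}.
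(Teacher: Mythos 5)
Your proposal is correct and follows essentially the same route as the paper: both reduce the claim to $\Vert \bm{n} + \bm{e}^i \Vert < 2\Vert \bm{n}\Vert$, bound the left side by $\Vert\bm{n}\Vert + \Vert\bm{e}^i\Vert$ (your triangle inequality is the unsquared form of the paper's Cauchy--Schwarz step on $\Vert\bm{n}+\bm{e}^i\Vert^2$), and then invoke the same key premise $\Vert\bm{e}^i\Vert < \Vert\bm{n}\Vert$, which the paper likewise simply asserts from the training dynamics for $1 \ll i \ll T$ (its Eq.~\eqref{eq:snr_cmp}) rather than derives. Your explicit acknowledgment that this step is an empirical assumption rather than a consequence of Eq.~\eqref{eq:loss} is a fair and accurate reading of where the real content of the theorem lies.
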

\begin{proof}
    For any iteration $i$ ($1 \ll i \ll T$), that model $f_{\theta}(\cdot)$ already (imperfectly/partially) fits visual contents yet does not fully fit the noise, we have: 
    \begin{align}
        \text{SNR}(\bm{\hat{x}}^{i}) &> \text{SNR}(\bm{y}), \label{eq:snr_cmp} \\
        \frac{\Vert \bm{x} \Vert}{\Vert \bm{e}^i \Vert} &> \frac{\Vert \bm{x} \Vert}{\Vert \bm{n} \Vert}.
        \label{eq:snr_cmp_def}
    \end{align}
    Let $\Vert \bm{e}^i \Vert = \delta \Vert \bm{n} \Vert$, Eq.\eqref{eq:snr_cmp_def} implies $\delta < 1$. Considering the SNR of substitution from Eq.\eqref{eq:replace}, we have:
    \begin{equation}
    \begin{aligned}
        \text{SNR}^2(\bm{\hat{y}}^{i+1}) &= \text{SNR}^2(\frac{\bm{y} + \bm{\hat{x}}^{i}}{2}) \\
        &= \frac{\Vert \bm{x} \Vert^2}{\Vert \frac{\bm{n} + \bm{e}^i}{2} \Vert^2} \\ 
        &\ge \frac{4\Vert \bm{x} \Vert^2}{\Vert \bm{n} \Vert^2 + 2\Vert \bm{n} \Vert \Vert \bm{e}^i \Vert + \Vert \bm{e}^i \Vert^2} \label{eq:ineq} \\
        &= \frac{4\Vert \bm{x} \Vert^2}{(1+\delta)^2\Vert \bm{n} \Vert^2} \\ 
        &= \frac{4}{(1+\delta)^2} \text{SNR}^2(\bm{y}), 
    \end{aligned}
    \end{equation}
     where Eq.\eqref{eq:ineq} follows the Cauchy-Shwartz inequality.
    \noindent Therefore, we have:
    \begin{align}
\text{SNR}^2(\bm{\hat{y}}^{i+1}) &> \frac{4}{(1+\delta)^2}, \text{SNR}^2(\bm{y}), \\
         \text{SNR}(\bm{\hat{y}}^{i+1}) &> \frac{2}{(1+\delta)} \text{SNR}(\bm{y}).
    \end{align}
    Recall that $\delta < 1$, then we have: $
        \text{SNR}(\bm{\hat{y}}^{i+1}) > \text{SNR}(\bm{y})$.
    Therefore, the substitution can improve the SNR of the supervision signal.
\end{proof}

\begin{remark}
    The worse-case of substitution from Eq.\eqref{eq:replace} is when $\bm{\hat{x}}^{i} = \bm{y}$, i.e., model $f_{\theta}(\cdot)$ perfect fits noisy observation, then $\text{SNR}(\bm{\hat{y}}^{i+1}) = \text{SNR}(\bm{y})$. And the best case is when $\bm{e}^{i}=\bm{0}$, i.e., $\bm{\hat{x}}^{i}=\bm{x}$, which is almost impossible, and no longer need for the substitution.
\end{remark}

Based on this observation, we propose an iterative substitution (ITS) strategy that performs the updating from Eq.\eqref{eq:replace} with given increasing iterations. For simplicity, we introduce a hyperparameter $N$ to generate a set of iterations $S = \{N, 2N, ..., kN\}$, then for each iteration in $S$, we perform the substitution from Eq.\eqref{eq:replace}, that is:
\begin{equation}
    \bm{\hat{y}}^{kN+1} = \frac{\bm{y} + \bm{\hat{x}}^{kN}}{2}.
    \label{eq:ITS}
\end{equation}

\begin{corollary_of_theorem}
    For a given set of iterations $S = \{N, 2N, ..., kN\}$, assuming $\text{SNR}(\bm{\hat{x}}^{kN}) > ... > \text{SNR}(\bm{\hat{x}}^{2N}) > \text{SNR}(\bm{\hat{x}}^{N}) > \text{SNR}(\bm{y})$, we have $\text{SNR}(\bm{\hat{y}}^{kN+1}) > ... >\text{SNR}(\bm{\hat{y}}^{2N+1}) > \text{SNR}(\bm{\hat{y}}^{N+1}) > \text{SNR}(\bm{\hat{y}}^{i})$, i.e., the SNR of renewed supervision signal in the given set of iterations increases one by one. 
\end{corollary_of_theorem}

The ITS approach capitalizes on the iterative refinement principle during the training process. By progressively substituting the denoised results into the supervision signal, the model constantly refines its understanding of the image and prevents overfitting, thereby enhancing the denoising performance. The increasing iterations ensure that the model doesn't hastily converge but instead gradually refines its output, potentially achieving a balance between noise suppression and preservation of crucial image details.

\section{Experiments}

\subsection{Experimental Setup}

\textbf{Datasets.} We choose two benchmark datasets: Set9 and Set11 \cite{dabov2007bm3d} for experiments. In detail, Set9 consists of nine RGB colorful images, while Set11 comprises eleven grayscale images. 

\textbf{Noises.} We select $\sigma \in \{25, 50, 75, 100\}$ for simulating different noise levels.

\textbf{Baselines.} We choose three baseline methods: BM3D~\cite{dabov2007bm3d}, DnCNN~\cite{zhang2017dncnn}, Restormer~\cite{syed2022restormer} as baselines for comparison.

\textbf{INR models.} We choose three INR models: DIP~\cite{ulyanov2018dip}, SIREN \cite{sitzmann2020implicit}, WIRE~\cite{saragadam2023wire} for experiments. Then, we apply our proposed ITS to them to evaluate the effectiveness of our method.

\textbf{Evaluation method.} For quantitative evaluation, we mainly refer to two metrics: Peak signal-to-noise ratio (PSNR) and structural similarity (SSIM). For qualitative evaluation, we visualize some denoised results.

\textbf{Implementation details.} We utilize the official pre-trained models for baselines: DnCNN and Restormer. Meanwhile, we follow the official implementations for BM3D and INR models: DIP, SIREN, and WIRE.


\subsection{Experimental Results}

\textbf{Quantitative results.} Table \ref{tab:main_results} reports experimental results of average PSNR and SSIM for Set9 and Set11 datasets. Restormer consistently presents the best baseline results of all noise levels in Set9, as the underlined values indicate. DnCNN achieves the best performance when $\sigma=25$ in Set11, while BM3D achieves the best results when $\sigma \in \{50, 75, 100\}$ in Set11. It is worth mentioning that Restormer and DnCNN are pre-trained with large-scale datasets in a supervised manner. Notably, INR models with ITS demonstrate substantial improvement over the standard INR models across various noise levels, as highlighted by boldface. Specifically, with a high noise level, applying ITS in DIP and SIREN results in a great performance enhancement; for example, when $\sigma=75$ on Set9, the PSNR of SIREN increases by 4.06. The enhancements are further validated by paired t-test statistical significance annotations, offering a more generalized perspective on the efficacy of the proposed ITS.

\textbf{Qualitative results.} Fig.~\ref{fig:denoised} visualizes the denoised results from different baselines, as well as INR models and INR models with ITS. To supplement this visual comparison, we further calculate the reconstruction error map from Eq.\eqref{eq:denoised} for each denoised result. According to Eq.\eqref{eq:denoised}, such an error map consists of residual noise and lost components; therefore, we utilize robust wavelet-based estimator \cite{donoho1994ideal} to estimate the (Gaussian) noise standard deviation $\hat{\sigma}$ on it, to reveal the noise level of residual noise. In detail, we observe that Restormer achieves the best denoised result among baselines. Among the INR models, SIREN delivers a better denoising capability. After introducing ITS, all INR models can be effectively regularized, thereby producing better results with increased PSNR and SSIM. In addition, comparing the reconstruction error map from INR models and those from them applied with ITS, we observe that the error maps for INR models incorporating ITS show a notable reduction in residual noise, as highlighted by the visual content difference and reduced $\hat{\sigma}$.


\subsection{Ablation Study}

In this section, we conduct a series of ablation studies to evaluate the effectiveness of our proposed approach.


\textbf{Impact of hyper-parameter: $N$.} We study the impact of hyper-parameter $N$ on three INR models with \textit{Lena}. Specifically, we fix $\sigma=25$ and select $N \in \{100, 200, 300, 400\}$. The results are presented in Fig. \ref{fig:ablation}. In general, we observe that ITS can boost the performance of three INR models. Besides, we observe that almost all INR models w/ ITS are not sensitive to the hyper-parameter, except the SIREN. In detail, for SIREN w/ ITS, the PSNRs from different hyper-parameters are close in final iterations, while the SSIMs have a gap of around 0.01.

\textbf{Relieve the overfitting.} As illustrated in Table \ref{tab:main_results} and Fig. \ref{fig:ablation}, the experimental results demonstrate that our method can relieve the overfitting problem of the INR models. The key reason has already been illustrated in Sec. \ref{sec:alg}, where the proposed ITS can iteratively improve the SNR of the supervision signal.

\begin{table}[t]
    \caption{Comparison and integration to regularization: RED and WTV. Experiments on \textit{Lena} with $\sigma=25$.}
    \label{tab:ablation}
    \centering
    \resizebox{\linewidth}{!}{\begin{tabular}{lccccc}
        \toprule
        \multirow{2}{*}{Method} & Last & Best & Num of Extra & \multirow{2}{*}{Time (min)} \\
         & PSNR / SSIM & PSNR / SSIM & Hparam. & \\
        \midrule
        \multicolumn{5}{l}{\textit{Comparison to regularization methods}} \\
        DIP & 26.74 / 0.60 & 30.18 / 0.79 & 0 & 3.33 \\
        DIP + WTV & 27.55 / 0.74 & 30.29 / 0.80 & 3 & 135.43 \\
        DeepRED & 27.84 / 0.75 & 30.40 / 0.80 & 5 & 26.29 \\
        DIP + ITS & \textbf{30.31} / \textbf{0.79} & \textbf{30.95} / \textbf{0.81} & 1 & 3.41 \\
        \midrule
        \multicolumn{5}{l}{\textit{Integration to regularization methods}} \\
        DIP + WTV + ITS & 29.98 / 0.78 & 30.40 / 0.80 & 3+1 & 137.45 \\
        DeepRED + ITS & \textbf{30.13} / \textbf{0.79} & \textbf{30.87} / \textbf{0.81} & 5+1 & 27.50 \\
        \bottomrule
    \end{tabular}}
\end{table}


\begin{figure}[t]
    \centering
    \includegraphics[width=0.98\linewidth]{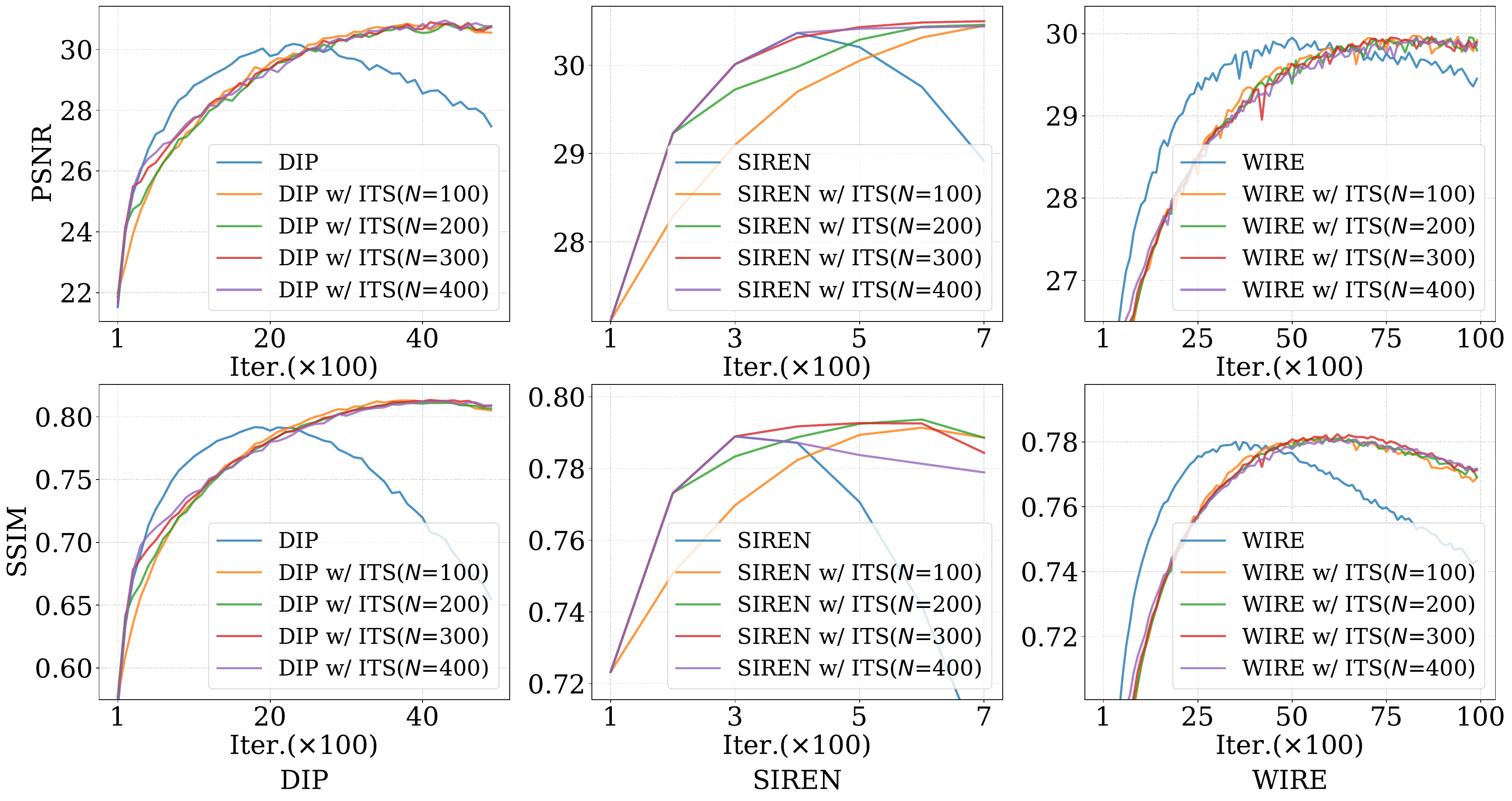}
    \vspace{-0.3cm}
    \caption{Impact of hyper-parameter: $N$. Experiments on \textit{Lena} with $\sigma=25$, and select $N \in \{100, 200, 300, 400\}$.}
    \label{fig:ablation}
\end{figure}

\textbf{Comparison to regularization methods.} We also compare our proposed ITS to regularization methods, including TV and RED. For a fair comparison, we choose the DIP as the baseline model and follow the training configurations from DeepRED \cite{mataev2019deepred}. In detail, we follow DeepRED \cite{mataev2019deepred} for combining DIP with RED. As for DIP with TV, we utilize a weighted version of TV (WTV) \cite{cascarano2021combining}, which has been proven to be better compared to ordinary TV \cite{liu2019diptv}. We conduct all experiments on the \textit{Lena} with $\sigma=25$. The experimental results are reported in Table \ref{tab:ablation}. It is worth reminding that the best performance is impossible to obtain for real-world denoising, because we do not have the ground truth. We report these best performances only to compare the improvement from different methods and to illustrate the overfitting problem. As shown in Table \ref{tab:ablation}, the proposed ITS outperforms two regularizations according to PSNR and SSIM from the last. In addition, from the performance gap between the last and best, we observe that the proposed ITS can consistently benefit the INR model and lead to less overfitting results, which follows our theoretical justification in Sec. \ref{sec:alg}.

\textbf{Integration to regularization methods.} We also study integrating ITS into existing regularization methods. For simplicity, we fix the hyper-parameters of the regularization method and integrate ITS into it. As shown in the last two rows of Table \ref{tab:ablation}, we show that integrating ITS into existing regularization methods also brings better results. 

\textbf{Time consumption.} As shown in the last column of Table \ref{tab:ablation}, we demonstrate that ITS only brings limited extra time consumption because ITS seamlessly integrates into the learning process of INR by only substituting the supervision signals with improved SNR ones. 

\section{Conclusion}
This work proposes a general iterative substitution (ITS) paradigm for regularizing INR models in image denoising. Our theoretical justification proves that ITS can improve the SNR of the supervision signal, thereby benefiting the learning process of INR models. Our experimental results demonstrate the effectiveness and efficiency of the proposed ITS. The major limitation of our method is that we assume the denoised result contains additive reconstruction error as illustrated in Eq.\eqref{eq:denoised}. Although such an assumption is widely adopted in prior works ~\cite{romano2015boosting,chang2018boost,chang2020boost,jang2021c2n}, the general assumption for the inverse problem, i.e., $\bm{\hat{x}}=\bm{A}\bm{x}+\bm{e}$, would be more appropriate, which additionally considers transformation formalized by $\bm{A}$, such as blurring. One future direction is to address the above limitation. In addition, extending our method to video denoising is one future direction. Besides, generalizing our problem and method to the general image inverse problem is another future direction. 


\vfill\pagebreak



\newpage
\section{REFERENCES}

\bibliographystyle{IEEEbib}
\bibliography{strings,refs}

\begin{thebibliography}{10}

\bibitem{zhang2008multiresolution}
Ming Zhang and Bahadir~K Gunturk,
\newblock ``Multiresolution bilateral filtering for image denoising,''
\newblock {\em IEEE Transactions on Image Processing}, vol. 17, no. 12, pp. 2324--2333, 2008.

\bibitem{buades2005review}
Antoni Buades, Bartomeu Coll, and Jean-Michel Morel,
\newblock ``A review of image denoising algorithms, with a new one,''
\newblock {\em Multiscale Modeling \& Simulation}, vol. 4, no. 2, pp. 490--530, 2005.

\bibitem{knaus2015dual}
Claude Knaus and Matthias Zwicker,
\newblock ``Dual-domain filtering,''
\newblock {\em SIAM Journal on Imaging Sciences}, vol. 8, no. 3, pp. 1396--1420, 2015.

\bibitem{chang2000adaptive}
S~Grace Chang, Bin Yu, and Martin Vetterli,
\newblock ``Adaptive wavelet thresholding for image denoising and compression,''
\newblock {\em IEEE Transactions on Image Processing}, vol. 9, no. 9, pp. 1532--1546, 2000.

\bibitem{blu2007sure}
Thierry Blu and Florian Luisier,
\newblock ``The {SURE-LET} approach to image denoising,''
\newblock {\em IEEE Transactions on Image Processing}, vol. 16, no. 11, pp. 2778--2786, 2007.

\bibitem{candes2013unbiased}
Emmanuel~J Candes, Carlos~A Sing-Long, and Joshua~D Trzasko,
\newblock ``Unbiased risk estimates for singular value thresholding and spectral estimators,''
\newblock {\em IEEE Transactions on Signal Processing}, vol. 61, no. 19, pp. 4643--4657, 2013.

\bibitem{dabov2007bm3d}
Kostadin Dabov, Alessandro Foi, Vladimir Katkovnik, and Karen Egiazarian,
\newblock ``Image denoising by sparse 3-d transform-domain collaborative filtering,''
\newblock {\em IEEE Transactions on Image Processing}, vol. 16, no. 8, pp. 2080--2095, 2007.

\bibitem{magg2013bm4d}
Matteo Maggioni, Vladimir Katkovnik, Karen Egiazarian, and Alessandro Foi,
\newblock ``Nonlocal transform-domain filter for volumetric data denoising and reconstruction,''
\newblock {\em IEEE Transactions on Image Processing}, vol. 22, no. 1, pp. 119--133, 2013.

\bibitem{zhang2017dncnn}
Kai Zhang, Wangmeng Zuo, Yunjin Chen, Deyu Meng, and Lei Zhang,
\newblock ``{Beyond a {Gaussian} denoiser: Residual learning of deep {CNN} for image denoising},''
\newblock {\em IEEE Transactions on Image Processing}, vol. 26, no. 7, pp. 3142--3155, 2017.

\bibitem{liang2021swinir}
Jingyun Liang, Jiezhang Cao, Guolei Sun, Kai Zhang, Luc~Van Gool, and Radu Timofte,
\newblock ``{SwinIR}: Image restoration using swin transformer,''
\newblock in {\em ICCV Workshops}, 2021, pp. 1833--1844.

\bibitem{syed2022restormer}
Syed~Waqas Zamir, Aditya Arora, Salman Khan, Munawar Hayat, Fahad~Shahbaz Khan, and Ming{-}Hsuan Yang,
\newblock ``Restormer: Efficient transformer for high-resolution image restoration,''
\newblock in {\em CVPR}, 2022, pp. 5718--5729.

\bibitem{batson2019n2s}
Joshua Batson and Lo{\"{\i}}c Royer,
\newblock ``{Noise2Self}: Blind denoising by self-supervision,''
\newblock in {\em ICML}, 2019, vol.~97, pp. 524--533.

\bibitem{alex2019n2v}
Alexander Krull, Tim{-}Oliver Buchholz, and Florian Jug,
\newblock ``{Noise2Void} - learning denoising from single noisy images,''
\newblock in {\em CVPR}, 2019, pp. 2129--2137.

\bibitem{quan2020s2s}
Yuhui Quan, Mingqin Chen, Tongyao Pang, and Hui Ji,
\newblock ``{Self2Self} with dropout: Learning self-supervised denoising from single image,''
\newblock in {\em CVPR}, 2020, pp. 1887--1895.

\bibitem{ulyanov2018dip}
Dmitry Ulyanov, Andrea Vedaldi, and Victor~S. Lempitsky,
\newblock ``Deep image prior,''
\newblock in {\em CVPR}, 2018, pp. 9446--9454.

\bibitem{sitzmann2020implicit}
Vincent Sitzmann, Julien N.~P. Martel, Alexander~W. Bergman, David~B. Lindell, and Gordon Wetzstein,
\newblock ``Implicit neural representations with periodic activation functions,''
\newblock in {\em NeurIPS}, 2020, pp. 7462--7473.

\bibitem{saragadam2023wire}
Vishwanath Saragadam, Daniel LeJeune, Jasper Tan, Guha Balakrishnan, Ashok Veeraraghavan, and Richard~G Baraniuk,
\newblock ``{WIRE}: Wavelet implicit neural representations,''
\newblock in {\em CVPR}, 2023, pp. 18507--18516.

\bibitem{wang2021early}
Hengkang Wang, Taihui Li, Zhong Zhuang, Tiancong Chen, Hengyue Liang, and Ju~Sun,
\newblock ``Early stopping for deep image prior,''
\newblock {\em Transactions on Machine Learning Research}, 2023.

\bibitem{RUDIN1992259}
``Nonlinear total variation based noise removal algorithms,''
\newblock {\em Physica D: Nonlinear Phenomena}, vol. 60, no. 1, pp. 259--268, 1992.

\bibitem{yaniv2017red}
Yaniv Romano, Michael Elad, and Peyman Milanfar,
\newblock ``The little engine that could: Regularization by denoising {(RED)},''
\newblock {\em {SIAM} Journal on Imaging Sciences}, vol. 10, no. 4, pp. 1804--1844, 2017.

\bibitem{liu2019diptv}
Jiaming Liu, Yu~Sun, Xiaojian Xu, and Ulugbek~S. Kamilov,
\newblock ``Image restoration using total variation regularized deep image prior,''
\newblock in {\em ICASSP}, 2019, pp. 7715--7719.

\bibitem{cascarano2021combining}
Pasquale Cascarano, Andrea Sebastiani, Maria~Colomba Comes, Giorgia Franchini, and Federica Porta,
\newblock ``Combining weighted total variation and deep image prior for natural and medical image restoration via {ADMM},''
\newblock in {\em ICCSA}, 2021.

\bibitem{mataev2019deepred}
Gary Mataev, Peyman Milanfar, and Michael Elad,
\newblock ``{DeepRED}: Deep image prior powered by red,''
\newblock in {\em ICCV Workshops}, 2019.

\bibitem{romano2015boosting}
Yaniv Romano and Michael Elad,
\newblock ``Boosting of image denoising algorithms,''
\newblock {\em SIAM Journal on Imaging Sciences}, vol. 8, no. 2, pp. 1187--1219, 2015.

\bibitem{chang2018boost}
Chang Chen, Zhiwei Xiong, Xinmei Tian, and Feng Wu,
\newblock ``Deep boosting for image denoising,''
\newblock in {\em ECCV}, 2018, pp. 3--19.

\bibitem{chang2020boost}
Chang Chen, Zhiwei Xiong, Xinmei Tian, Zheng{-}Jun Zha, and Feng Wu,
\newblock ``Real-world image denoising with deep boosting,''
\newblock {\em {IEEE} Transactions on Pattern Analysis and Machine}, vol. 42, no. 12, pp. 3071--3087, 2020.

\bibitem{ma2022boost}
Jiayi Ma, Chengli Peng, Xin Tian, and Junjun Jiang,
\newblock ``{DBDnet}: A deep boosting strategy for image denoising,''
\newblock {\em {IEEE} Transactions on Multimedia}, vol. 24, pp. 3157--3168, 2022.

\bibitem{xu2019frequency}
Zhi-Qin~John Xu, Yaoyu Zhang, Tao Luo, Yanyang Xiao, and Zheng Ma,
\newblock ``Frequency principle: Fourier analysis sheds light on deep neural networks,''
\newblock {\em Communications in Computational Physics}, vol. 28, no. 5, pp. 1746--1767, 2020.

\bibitem{xu2023revisiting}
Wentian Xu and Jianbo Jiao,
\newblock ``Revisiting implicit neural representations in low-level vision,''
\newblock in {\em ICLR Workshops}, 2023.

\bibitem{jang2021c2n}
Geonwoon Jang, Wooseok Lee, Sanghyun Son, and Kyoung~Mu Lee,
\newblock ``{C2N}: Practical generative noise modeling for real-world denoising,''
\newblock in {\em ICCV}, 2021, pp. 2330--2339.

\bibitem{donoho1994ideal}
David~L Donoho and Iain~M Johnstone,
\newblock ``Ideal spatial adaptation by wavelet shrinkage,''
\newblock {\em Biometrika}, vol. 81, no. 3, pp. 425--455, 1994.

\end{thebibliography}

\end{document}